\documentclass[letterpaper, 10 pt, conference]{ieeeconf}
\IEEEoverridecommandlockouts
\overrideIEEEmargins  
\usepackage{cite}
\usepackage{subcaption,titlesec}
\usepackage{amsmath,amssymb,amsfonts}

\newtheorem{model}{Model} 
\newtheorem{proposition}{Proposition} 
\newtheorem{lemma}{Lemma} 

\usepackage{graphicx}
\usepackage{standalone}
\graphicspath{{./sections/figs/}}

\newcommand{\sensor}{touch sensor} 
\newcommand{\sensors}{touch sensors}

\newcommand{\Tsr}{{T}_{sr}}
\newcommand{\Trw}{{T}_{rw}}
\newcommand{\robot}{\mathcal{R}}
\newcommand{\reals}{\mathbb{R}}
\newcommand{\delW}{\partial W}
\newcommand{\qRob}{q_r}
\newcommand{\histx}{\tilde{x}}
\newcommand{\histX}{\tilde{X}}

\newcommand{\CD}{cd}
\newcommand{\CDS}{cds}
\newcommand{\CS}{cs}
\newcommand{\DS}{ds}
\newcommand{\BC}{bc}
\newcommand{\projFn}{\texttt{proj}}

\usepackage{acronym}
\acrodef{cw}[cw]{clockwise}
\acrodef{ccw}[ccw]{counterclockwise}

\begin{document}
\bstctlcite{IEEEexample:BSTcontrol} 

\title{Visibility-Inspired Models of Touch Sensors for Navigation}

\author{Kshitij Tiwari${}^{1}$, Basak Sakcak${}^1$, Prasanna Routray${}^2$, Manivannan M.${}^2$, and Steven M. LaValle${}^1$
\thanks{${}^1$Center of Ubiquitous Computing, Faculty of Information Technology and Electrical Engineering, University of Oulu, Oulu, Finland,
        {\tt\small firstname.lastname@oulu.fi}}%
\thanks{${}^2$Touch Lab, Center for Virtual Reality and Haptics, Indian Institute of Technology Madras, India}
\thanks{This work was supported by a European Research Council Advanced Grant (ERC AdG, ILLUSIVE: Foundations of Perception Engineering, 101020977), Academy of Finland (projects PERCEPT 322637, CHiMP 342556), and Business Finland (project HUMOR 3656/31/2019).}}

\maketitle


\begin{abstract}
This paper introduces mathematical models of \sensors\ for mobile robots based on visibility.  Serving a purpose similar to the pinhole camera model for computer vision, the introduced models are expected to provide a useful, idealized characterization of task-relevant information that can be inferred from their outputs or observations. Possible tasks include navigation, localization and mapping when a mobile robot is deployed in an unknown environment. These models allow direct comparisons to be made between traditional depth sensors, highlighting cases in which touch sensing may be interchangeable with time of flight or vision sensors, and characterizing unique advantages provided by touch sensing. The models include contact detection, compression, load bearing, and deflection.  The results could serve as a basic building block for innovative touch sensor designs for mobile robot sensor fusion systems. 
\end{abstract}

\section{Introduction}
\label{sec:intro}

\begin{figure*}[!htbp]
\centering
\begin{subfigure}{.2\textwidth}
  \centering
  \includegraphics[scale=0.15, clip=true, trim ={3cm 1cm 0cm 0cm}]{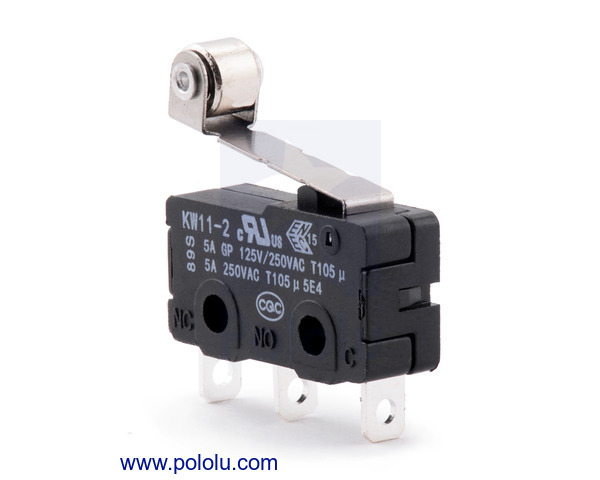}
  \caption{Roller lever switch}
  \label{fig:bumper-switch-1}
\end{subfigure}%
\begin{subfigure}{.2\textwidth}
  \centering
  \includegraphics[scale=0.11]{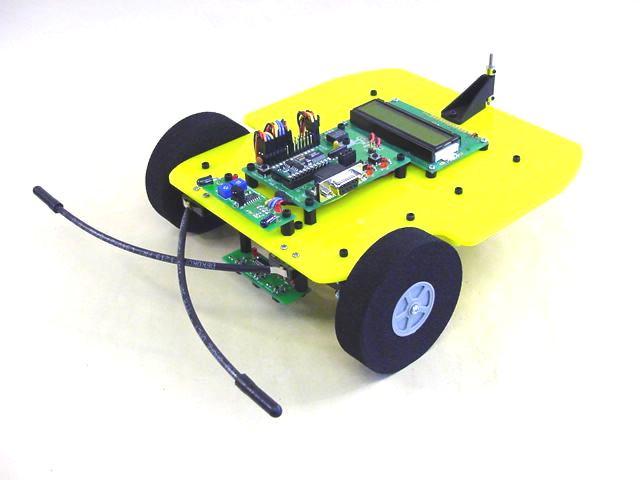}
  \caption{Bumper switch}
  \label{fig:bumper-switch-2}
\end{subfigure}%
\begin{subfigure}{.2\textwidth}
  \centering
    \includegraphics[scale=0.16]{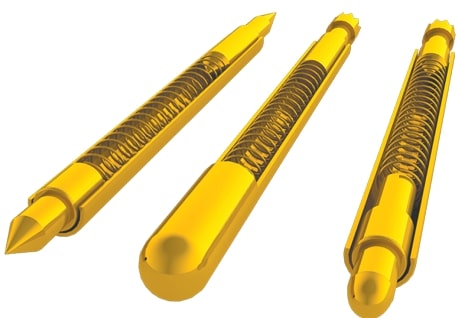}
    \caption{Pogo pin}
    \label{fig:pogopin}
\end{subfigure}%
\begin{subfigure}{.2\textwidth}
  \centering
  \includegraphics[scale=0.12]{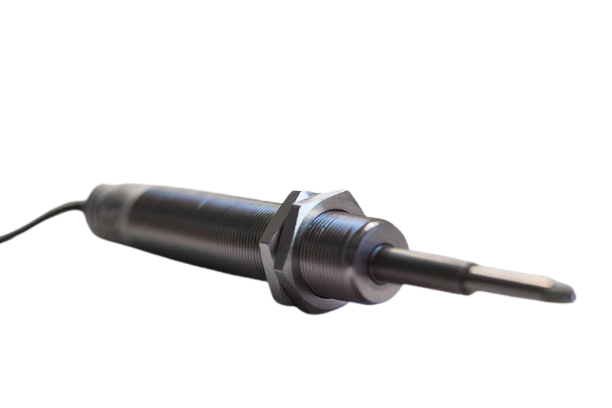}
    \caption{LVDT}
    \label{fig:lvdt}
\end{subfigure}%
\begin{subfigure}{.2\textwidth}
  \centering
    \includegraphics[scale=0.16]{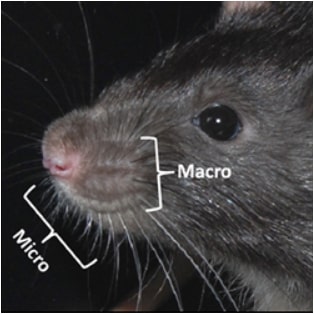}
    \caption{Rat vibrissae}
    \label{fig:macro-micro-vibrissae}
\end{subfigure}%
\caption{Various types of \sensors. \ref{fig:bumper-switch-1} and \ref{fig:bumper-switch-2} are common contact detectors; \ref{fig:pogopin} and \ref{fig:lvdt} are typical compressible \sensors\ (Images from e-ptnk.co.kr/eng/pogo-pin/ and hoffmann-krippner.com/types-of-lvdts/, respectively). LVDT stands for Linear Variable Differential Transducer; \ref{fig:macro-micro-vibrissae} shows macro- and microvibrissae in rats~\cite{grant2012role}.}
\end{figure*}

Touch sensor designs have evolved over the past four decades yet the touch modality is not as deeply explored as vision and auditory. The existing touch sensors are often inspired by touch receptors and are aimed at robotic and industrial applications which typically require human-like prehensile manipulation capabilities for the robots to work safely around humans.
Beyond manipulation, in nature, mammals (like rats, shrews, pinnipeds etc.) and fishes use the touch modality for navigation utilizing pre- and post-contact touch feedback. Thus, in this work, we look further to explore the possibility of using the touch modality for navigation with mobile robots. 

For navigation in unknown and unstructured environments researchers typically resort to distal sensors such as cameras and Lidars. The utility of such sensors may seem limiting when it comes to cases where a robot needs to navigate in dimly lit environments. Similarly, soft terrains present wheel slippage and sinkage challenges for ground robots and the space exploration community can benefit from such solutions~\cite{reina2006wheel}. Although vision-based solutions such as \cite{hegde2013computer} exist, these approaches are often prone to failure under sudden illumination changes and dust. Similarly, when inspecting underwater mines using an autonomous marine vehicle, the murky and rough waters may even harm the optical sensors~\cite{scholz2004profile}. Few researchers have started looking into potential use cases of vibrissae-enabled robots~\cite{an2021biomimetic}. Rodents use their vibrissae, a form of touch sensor, to detect both tactile and kinesthetic features \cite{mccloskey1988kinesthesia}. Although tactile features help infer the surface profile through textural perception~\cite{grant2012role}, kinesthetic features are perceived through proprioception~\cite{tuthill2018proprioception}, which can be thought of as self-localization. 

In this work, we investigate the utility of \sensors\, mounted on a mobile robot for proprioception. Some \sensors\, provide very limited information such as the bumper sensors (see Figs.~\ref{fig:bumper-switch-1}, \ref{fig:bumper-switch-2}) whereas others could capture a lot more information about the objects such as their surface profile. As surface profile provides a qualitative idea about the environment, profile sensing with \sensors\ can augment the path planning capability for mobile robots. We consider two types of \sensors- \textit{rigid} and \textit{compliant} where compliance is limited to their ability to compress (change its volume under applied transverse load, as in Figs.~\ref{fig:pogopin}, \ref{fig:lvdt}) or bend (similar to the vibrissae shown in Fig.~\ref{fig:macro-micro-vibrissae}). Compressible \sensors\ can be used for motion planning for collision resilient robots as was shown in~\cite{lu2021deformation} and hence, we consider compressibility as a desirable form of compliance for \sensors. Also, we are interested in contrasting the quality of information between actuated and unactuated \sensors. For instance, in \cite{giguere2011simple}, an actuated probe (rigid link) was used for terrain identification to guide the otherwise blind robot through an environment while avoiding dangerous patches. In \cite{lepora2018tacwhiskers}, the researchers showed how actuation directly impacts object localization. They considered two variants of a whisker-like touch sensor array- one actuated like the motile macrovibrissae and the other static like the immotile microvibrissae. They showed that the static \sensor\, had relatively poor object localization accuracy, and in this work we analyze such claims in terms of preimages.

To contrast various \sensors, we resort to \textit{virtual sensor models} from \cite{lavalle2012sensing}. The aim here is to compare the sensors in terms of their preimages, and develop mathematical models independent of their physical realization. 
This could elucidate various aspects of \sensors\ to increase the task success rate when deploying mobile robots in unstructured environments for tasks such as navigation, localization or mapping.

\section{Virtual \sensor~models}
\label{sec:virtual-sensors}

There is a wide spectrum of \sensors~available in the literature, some of which were discussed above. But the question remains, given any two \sensors, how does one know which is superior compared to the other? A coarse elimination is possible given some prior knowledge of the downstream task, but then given a family of sensors suited to the task, how does one narrow down to an optimal sensor with respect to a particular relevant criterion? Every transducer and design has strengths and weaknesses making it difficult to do a fair comparison. To facilitate this discussion, here we discuss the concept of \textit{virtual sensors} from \cite{lavalle2012sensing}. A \textit{virtual sensor} is a mathematical abstraction of the sensor, different from the models that explain its physics, and independent of its physical realization thereby allowing a fair comparison. In what follows, we will first describe the concept of sensor mapping followed by formally defining the state spaces. Then, we present various virtual \sensors\ followed by a discussion on compositions of such models and their comparison with conventional visibility-based sensors.\newline


\subsection{Sensor Mapping}\label{sec:sm}
Even though the mathematical model that explains the physics of a sensor varies with its design we can define a virtual sensor \cite{lavalle2012sensing} as a mapping, 

\begin{equation}
    h : X \rightarrow Y,
    \label{eq:sensor_mapping}
\end{equation}
from the robot's state space $X$ to the observation space $Y$. The state space $X$ refers to the set of all the states that the robot can be in and the observation space $Y$ is the set of all possible observations (measurements) that a sensor can make. In the following, we will refer to $h$ as the sensor mapping. In order to lighten the notation, in the following sections, when describing some sensors, we will use $h(x;m)$ to describe a sensor mapping for which $m$ are the parameters. 

Typically a sensor mapping is not bijective; whereas each $x \in X$ maps to a single $y \in Y$, multiple states can give out the same observation. 
Therefore, in the general case, the sensor mapping is not invertible and the preimage of $y$ under $h$, written as $h^{-1}(y)$, is defined as the set,
\begin{equation} 
    h^{-1}(y) = \{x \in X \ | \ y = h(x)\}.
    \label{eq:preimage-defn}
\end{equation}
Note that if the sensor mapping is bijective, then Eq.~\eqref{eq:preimage-defn} corresponds to the inverse of the function $h$. As $h$ is defined over the state space $X$, the subsets of $X$ corresponding to the preimages of $h$ form a partition of $X$ denoted by $\Pi(h)$.

Consider two sensors, $h_1$ and $h_2$, defined over the same fixed state space $X$ that has different observation spaces $Y_1$ and $Y_2$. 
If the partition $\Pi(h_1)$ is a refinement of $\Pi(h_2)$ then $h_1$ can simulate $h_2$ which means that there exists a function $g: Y_1 \rightarrow Y_2$ such that $h_2(x)=g(h_1(x))$, written as $h_2=g\circ h_1$ \cite{lavalle2012sensing}.
Two sensors are equivalent, denoted by $h_1 \cong h_2$, if the partition induced by the sensors, $\Pi(h_1)$ and $\Pi(h_2)$, respectively, satisfies $\Pi(h_1)=\Pi(h_2)$. 
If two sensors are equivalent they can simulate each other.

\subsection{State Space}\label{sec:state_space}

We consider a mobile robot moving in a 2D planar environment $W$  which is the closure of the open set that is piecewise diffeomorphic to a finite disjoint union of circles. The boundary of $W$ is denoted as $\delW$. The robot is equipped with a \sensor\, which can be seen as a link attached to the robot base whose characteristics vary across different models. However, the discussion in this work is limited to \sensors\, that are either rigid or compliant that too are either compressible or bendable. 

We will refer to three coordinate frames and the corresponding homogeneous transformations between them as shown in Fig~\ref{fig:coordinate-transforms}a. We express the configurations of the \sensor\ and the robot as $q_s$ and $q_r$, respectively. The robot configuration $\qRob$ is defined by $q_r=(q_x,q_y,q_\theta)$, in which $(q_x,q_y) \in W$ and $q_\theta \in S^1$ are the position and the orientation of the robot with respect to the world frame, respectively.  The world frame corresponds to an absolute reference frame with respect to which the robot configuration is defined. The robot frame and the sensor frame are the reference frames attached to the robot and the \sensor, respectively. The homogeneous transform between the robot frame and the world frame is given by $\Trw(\qRob)$ and varies with the configuration of the robot $\qRob$. As for the \sensor\, in its nominal form, it is aligned with the x-axis of the sensor frame (see Figs.~\ref{fig:coordinate-transforms}b and c) which will be referred as the sensor axis from hereon. 
For unactuated \sensors, the sensor frame is fixed with respect to the robot frame and $\Tsr$ is the homogeneous transform between two coordinate frames. To simplify the notation, we will assume that the robot frame and the sensor frame coincide for unactuated \sensors, such that $\Tsr$ is an identity matrix. In cases that the \sensor\ is actuated, the homogeneous transform between the sensor and robot frame is given by $\Tsr(\psi)$ in which $\psi \in S^1$ is the amount of rotation of the sensor frame with respect to the robot frame.

We now define the configurations of compliant \sensors. Compressible \sensors\, can compress along the sensor axis, similar to a linear spring or a link attached to a prismatic joint (like Figs.~\ref{fig:pogopin} and \ref{fig:lvdt}). The sensor configuration is expressed by the variable $q_s=\delta$ and the robot state is the tuple $x=(\qRob,\delta)$ in which $\delta \in [0, \delta_{max}]$ is the amount of compression along the sensor axis (as shown in Fig.~\ref{fig:coordinate-transforms}b).
For the bendable \sensors\, (like Figs.~\ref{fig:bumper-switch-2} and \ref{fig:macro-micro-vibrissae}) we assume that the deflection from the nominal form is parameterized by $q_s=\alpha \in [\alpha_{min},\alpha_{max}]$ in which $\alpha_{min}<0<\alpha_{max}$. 
If $\alpha=0$ then the \sensor\, is in its nominal form (see Fig.~\ref{fig:coordinate-transforms}c). Thus, each $\alpha$ corresponds to a diffeomorphism that maps the points along the \sensor\, at its nominal shape to the ones corresponding to $\alpha\not=0$. The robot state $x$ is the robot configuration and the sensor configuration, i.e., $x=(\qRob,\alpha)$. 

If the sensor can not change its configuration by itself but conforms to the environment, $\delta$ and $\alpha$ are not free variables. Consequently, their value can be derived from the robot configuration $\qRob$ and the environment description $W$, similar to closed kinematic chains. To this end, for each type of sensor, we can define a contact-constraint equation. For a compressible \sensor, 
\begin{equation}\label{eq:constraint_comp}
    \delta=f_{comp}(\qRob,\delW)
\end{equation}
is the amount of compression for a given environment and robot configuration. Similarly,
\begin{equation}\label{eq:constraint_bend}
    \alpha=f_{bend}(\qRob,\delW)
\end{equation}
describes the amount of deflection. 
The solution to Eq.~\eqref{eq:constraint_comp} is unique, meaning that for each $\qRob$ we can uniquely determine the amount of compression. However, Eq.~\eqref{eq:constraint_bend} can admit multiple solutions.

Finally, let $\robot(x) \subset \reals^2$ denote the points occupied by the robot in the world frame, equipped with the \sensor\, at state $x$.
The robot state space $X$ is the set of all the states that the robot can be in without collisions. The robot is in collision if $\robot(x) \setminus W \not=\emptyset$.

\begin{figure}[!t]
    \centering
    \includegraphics[scale=0.3]{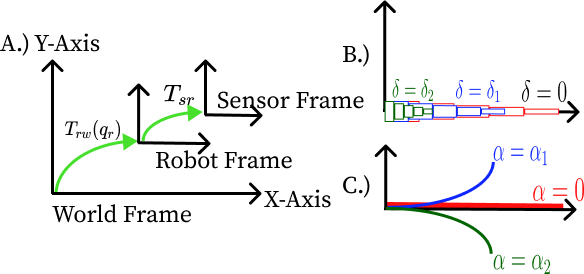}
    \caption{(a): Transformation between the world (fixed), robot and sensor frames. The function $T$ represents a homogeneous transformation. The transformation from robot frame to world frame is given by $\Trw(\qRob)$, and the static transform from the sensor frame to robot frame is given by $\Tsr$. For actuated \sensors\, the sensor to robot frame transform is given by $\Tsr(\psi)$ in which $\psi$ represents the rotation of the sensor frame with respect to the robot frame. (b) Various configurations of compressible and (c) bendable  \sensor.}
    \label{fig:coordinate-transforms}
\end{figure}

\subsection{Sensor models}
\label{subsec:sensor-models}

We now introduce a family of \sensors\, spanning rigid and compliant \sensors. Although the discussion focuses on a single unactuated
\sensor, the models presented herein can easily be extended to actuated \sensors\, 
as will be discussed in Sec.~\ref{sec:analysis}. 

We first consider contact detectors such that the sensor is not compliant with the environment and it is fixed with respect to the robot (see, for example, Fig.~\ref{fig:bumper-switch-1}). The state in this case can be expressed solely by the robot configuration, i.e., $x=\qRob$. 

\begin{model}[Contact detector]
Let $p\in \reals^2$ be the position of the contact detector whose coordinates are expressed in the sensor
frame. The sensor mapping for the contact detector is 
\label{model:contact-detector}
\begin{align}\label{eq:contact-detector} 
h_{\CD}(x;p) &= 
    \begin{cases}
      1& \text{if } \Trw(\qRob)\Tsr\, \begin{pmatrix} p & 1 \end{pmatrix}^T\in \delW\\ 
      0& \text{otherwise}.  
    \end{cases}
\end{align}
\end{model}
The preimages corresponding to $h_{\CD}$ result in a partition $\Pi(h_{\CD})$ of two classes.

The next model describes a \sensor\, when the contact can be detected along the \sensor\, whose geometry is defined as the curve $\sigma : [0,1] \rightarrow \reals^2$ which maps to coordinates in the sensor
frame. 

\begin{model}[Contact detector strip]
\label{model:contact-detector-strip}
Suppose there are $N$ interior disjoint intervals, $I_i$, $i=1,\dots,N$ such that $\bigcup_{i=1}^N I_i = [0,1]$. Let the end points of $N$ intervals correspond to a uniform discretization of $[0,1]$ imposing a particular resolution. Let $\mathcal{I}$ denote the set of $N$ intervals and let $\mathcal{I}_C$ be the set of intervals that are in contact with the boundary such that $\mathcal{I}_{C}=\{I \in \mathcal{I} \mid \exists p\in \sigma(I) \text{ s.t. }\, h_{\CD}(x,p)=1 \} \subseteq \mathcal{I}$. Then, the contact detector strip is described by 
\begin{equation}
    h_{\CDS}(x,\sigma)=\mathcal{I}_C.
\end{equation}

For the limiting case, as $N\rightarrow \infty$, the observation corresponds to the exact location (or locations) of contact along the strip. 
At the other extreme, when $N=1$, the whole strip reduces to just one interval reporting whether a contact is made anywhere along the strip.
\end{model}

Next, we discuss a \sensor~that complies in response to touch by compressing along the sensor axis. The robot state is given by $x=(\qRob,\delta)$ in which $\delta \in [0, \delta_{max}]$ is the amount of compression. 

\begin{model}[Compression sensor]\label{model:compression-sensor}
The sensor which can detect the amount of compression in the \sensor\, is described by 
\begin{equation}
    h_{\CS}(x)=\delta.
\end{equation}
The preimage of an observation $y=h_{\CS}(x) > 0$ is typically a union of disjoint two-dimensional subsets of the state space. 
Suppose the length of the uncompressed \sensor\, is $\ell$. 
Let $P_{\theta} \subset W$ be the set of all points $u \in W$ for which there exists a $v\in\delW$ such that a line segment $\overline{uv}$ of length $l-\delta$ and slope $\theta$ satisfies $\overline{uv}\setminus W = \emptyset$.
Then $h_{\CS}^{-1}(y)=\bigcup_{\theta \in S^1}P_{\theta}\times \{\theta\} \times \{y\}$. The preimage of $y=0$ is all the states such that the \sensor\, of length $\ell$ is not in collision. Thus, the states for which the \sensor\, is in contact with the boundary and the ones such that the \sensor\, lies completely in the interior of $W$ belong to the same equivalence class and are indistinguishable.

\end{model}

Yet another form of compliance is the ability to bend under load. For this, consider the bendable \sensors\, for which the state is $x=(\qRob,\alpha)$.

\begin{model}[Load sensor]
\label{model:load-sensor}
This sensor can detect the amount of shear force applied to the compliant bendable \sensor. This model addresses the transverse point load, a force applied at a single point along the \sensor\,. The sensor mapping is described by
\begin{equation}
    h_{ls}(x)= F,
\end{equation}
and $F \in [-F_{max}, F_{max}]$, in which $F_{max}$ is the maximum bearable load.
\end{model}

The next model describes the state of the robot in terms of $x=(\qRob,\alpha)$, in which $\alpha$ is the deflection of the \sensor\ upon contact. The load applied, physical dimension, and material property together determine the deflection.

\begin{model}[Deflection sensor]
\label{model:deflection-sensor}
The sensor which can detect the amount of deflection in the \sensor\, is described by
\begin{equation}
    h_{\DS}(x)=\alpha.
\end{equation}

For each $\alpha \in [\alpha_{min},\alpha_{max}]$, the \sensor\, has the shape described by the curve $\sigma_\alpha:[0,1]\rightarrow\reals^2$ which maps to the coordinates expressed in the sensor frame. The preimages for $y=h_{\DS}(x) \neq 0$ is all states such that intersection of $\delW$ and the image of $\sigma_\alpha$ (coordinates mapped to the world frame) is not an empty set and that $\robot(x) \setminus W=\emptyset$. This corresponds to a union of disjoint 2D or 3D {(in case multiple orientations are possible to touch the boundary at some point along the sensor)} subsets of the state space. Also for Model~\ref{model:deflection-sensor}, for $y=0$, the states for which the \sensor\, is in contact with the boundary and the ones such that the \sensor\, lies completely in the interior of $W$ belong to the same equivalence class and are indistinguishable.

\end{model}

\subsection{Composition of virtual \sensors}

Recall that the preimage corresponding to the observation $y=0$ for Models~\ref{model:compression-sensor} and \ref{model:deflection-sensor} include also the states at which the sensor is not in contact with the environment which creates ambiguity. This can simply be alleviated by adding a contact detector at the \sensor\, tip. To this end, Model~\ref{model:compression-sensor} can be extended by combining it with Model~\ref{model:contact-detector} such that
\begin{equation}\label{eq:comp_contact}
    h'_{\CS}(\qRob,\delta)=
    \begin{cases}
        \delta & \text{if }\, h_{\CD}(\qRob;p_t)=1 \\ 
        \# & \text{otherwise},
    \end{cases}
\end{equation}
in which $\#$ refers to no value and $p_t$ is the point corresponding to the tip of the \sensor\, whose exact location in sensor frame is $p_t=(\ell-\delta,0)$. The same strategy can be followed also for bendable \sensor\, corresponding to a deflection sensor (Model~\ref{model:deflection-sensor}) in which case $p_t=\sigma_\alpha(1)$ and the respective sensor mapping is 
\begin{equation}\label{eq:comp_deflect}
    h'_{\DS}(\qRob,\alpha)=
    \begin{cases}
        (\alpha, 1) & \text{if }\, h_{\DS}(\qRob;p_t)=1 \\ 
        (\alpha, 0) & \text{otherwise}.
    \end{cases}
\end{equation}

On the other hand, the main issue with bendable \sensors\, is the ambiguity regarding the point of contact. Unlike compressible \sensors\, for which the contact needs to be at the tip to get a non-zero observation, bendable \sensors\, can be in contact with the environment at any point along the sensor and still achieve that. To refine the preimages, we can enhance $h_{\DS}$ (Model~\ref{model:deflection-sensor}) with a contact detector strip (Model~\ref{model:contact-detector-strip}). Suppose that the contact detector strip can conform to the shape assumed by the bendable \sensor\, (like an elastic strip). The new sensor model is described as
\begin{equation}\label{eq:model_contact_strip_bendable}
     h_{\BC}(\qRob,\alpha)= h_{\CDS}(\qRob;\sigma_\alpha),
\end{equation}
in which $\sigma_\alpha$ is the curve corresponding to the deflection determined by $\alpha = h_{\DS}(x)$. The observation vector $y=h(x)$ would then correspond to the intervals (or points if $K=\infty$) such that the bendable \sensor\, is in contact with $\delW$. The partition of $X$ resulting from the preimages of $h$, i.e., $\Pi(h_{\BC})$, will be a refinement of $\Pi(h_{\DS})$ such that each element of $\Pi(h_{\BC})$ is a union of disjoint two-dimensional subsets of $X$.

\begin{figure}
    \centering
    \includegraphics[scale=0.3]{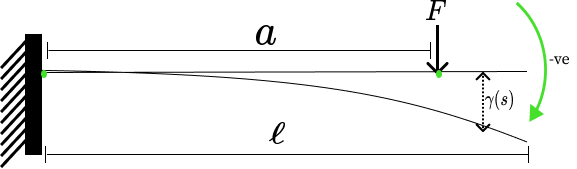}
    \caption{A bendable \sensor\ can be modeled as a simply supported cantilever beam of length $\ell$. An intermediate load $F$ is applied at a distance $a\leq\ell$ from the base in the positive direction of sensor axis. Then, $\gamma(s)$ is the deflection at any point $s$ along the sensor such that $\gamma(0)=0$ and maximum deflection is attained at $\ell$.}
    \label{fig:load-sensor}
\end{figure}

Suppose the bendable sensor is in contact with the environment at a single point and let $h_{\BC}$ be the contact detector strip defined in Eq.~\eqref{eq:model_contact_strip_bendable} that reports the exact point of contact ($N=\infty$). Let $a=h_{\BC}(x)$ be the point that is in contact with the boundary (coordinates expressed in the sensor frame), and let $F=h_{ls}(x)$ be the load acting on the sensor. If we consider only small deflections that satisfy the assumptions of classical beam theory, we can model the deflection from the nominal sensor shape as a function of the distance along the sensor length. Then, deflection $\gamma(s)$ (see Fig~\ref{fig:load-sensor}) at any $s \in [0,1]$ under load $F=h_{ls}(x)$ can be calculated as~\cite{shigley2011shigley}
\begin{align}
    \gamma(s)=
    \begin{cases}
        \frac{- F (s\ell)^2 (3 a - s\ell)}{6 E I} & \text{if }\, 0 \leq s < \frac{a}{\ell} \\
        \frac{- F a^2 (3 s\ell - a)}{6 E I} & \text{if }\, \frac{a}{\ell} \leq s < 1  \\
        \frac{F a^2 (3 \ell - a)}{6 E I} & \text{if }\, s = 1,  \\
    \end{cases}
\end{align}
in which $E\,,I$ stand for the Young's modulus and moment of inertia, respectively, and $\ell$ is the sensor length.
Note that the maximum deflection happens at the tip of the \sensor. 
By considering small deflections, we also assume that the projection of the sensor tip on the horizontal axis of the sensor frame corresponds to $\ell$ and that the length of the sensor under load, that is, $\ell'$ satisfies $\ell' \approx \ell$. Consequently, we can define the curve corresponding to the bendable \sensor\, as $\sigma_\alpha(s)=(s\ell, \gamma(s))$ in which $\alpha = h_{\DS}(x)$. This provides an example when the composition of a load sensor and a contact detector sensor in Eq.~\eqref{eq:model_contact_strip_bendable} is equivalent to a deflection sensor (Model~\ref{model:deflection-sensor}).

\section{Comparison to visibility-based models}
\label{sec:analysis}

In this section, we compare \sensors\ from Sec.~\ref{subsec:sensor-models} with traditional depth sensors from \cite{lavalle2012sensing} and evaluate conditions under which these are equivalent with respect to the definitions introduced in Sec.~\ref{sec:state_space}.

\subsection{Virtual depth sensors}\label{sec:depth_sensors}
Relevant depth sensor models are presented here for completeness. 
Let $Q=W \times S^1$ be the robot configuration space, that is, the set of all possible configurations of a point robot\footnote{Note that we do not need to assume a point robot, and $Q$ can as well be the set of all robot configurations such that the robot footprint is contained in $W$.}, over which a depth sensor is defined.

\begin{model}[Depth-limited directional depth sensor]
\label{model:lim-directional-depth}
Let $b(\qRob) \in \delW$ denote the point  
struck by the ray emanating from $(q_x,q_y)$ in the direction of $q_\theta$. The sensor model is given by
\begin{equation}
    h_{dds}(\qRob) = 
    \begin{cases}
        d(\qRob)& \text{if } d_{min} \leq d(\qRob) \leq d_{max}\\
        \# &\text{otherwise,}
    \end{cases}
\end{equation}
in which $d(\qRob)=||(q_x,q_y)-b(\qRob)||$
and $[d_{min},d_{max}]$ with $d_{min}<d_{max}$ is the sensor detection range.
\end{model}

Similar to Model~\ref{model:lim-directional-depth}, we can define a K-directional depth sensor.
Suppose there is a set of offset angles $\phi_1,\ldots,\phi_K$, which are oftentimes evenly spaced.
The sensor mapping for such a sensor is then given by $h_{kdd}(\qRob)=(y_1,\dots,y_K)$ in which $y_k=||(q_x,q_y)-b(q_x,q_y,q_\theta+\phi_k)||$ for every offset angle $\phi_{k \in \{1,\ldots,K\}}$.

\begin{model}[Depth-limited omnidirectional depth sensor]\label{model:lim-omnidir-depth}
In the limit case, as $K \rightarrow \infty$, we obtain an omnidirectional depth sensor. In this case, the observation $h_{od}(q_r)=y$ is an entire function $y: [0,2\pi) \rightarrow [d_{min},d_{max}]\cup \{\#\}$ and 
\begin{equation}
   y(\phi)= h_{dds}(q_x,q_y,q_\theta+\phi).
\end{equation}
\end{model}

\subsection{Unactuated \sensors}

For compressible and bendable \sensors, the state space $X$ is a subset of the Cartesian product of the robot configuration space and possible \sensor\, configurations. As the two families of sensors (depth sensors and \sensors) do not share the same state space, they cannot be compared directly based on the previously introduced definitions of equivalence and sensor simulation. 
However, in the following, we will use a map from the state space to the robot configuration space to argue about these relations. 
Let $\projFn:X \rightarrow Q$ be a function that maps the state to its elements corresponding to the robot configuration. 
Consider a \sensor\, $h_1$ and denote the partition induced by its preimages as $\Pi(h_1)$.
Suppose $\projFn$ is bijective. Then, the projection of the elements of each set in $\Pi(h_1)$ onto $Q$ forms a partition of $Q$, denoted as $\Pi_Q(h_1)$. 
A \sensor, $h_1$, defined over $X$ and a depth sensor, $h_2$, defined over $Q$ are equivalent if the partitioning of the robot configuration space induced by the two sensors, $\Pi_Q(h_1)$ and $\Pi(h_2)$, respectively, satisfy $\Pi_Q(h_1)=\Pi(h_2)$. Similarly, $h_1$ can {\em simulate} $h_2$ if they are equivalent or if $\Pi_Q(h_1)$ is a refinement of $\Pi(h_2)$. 


Since the contact constraint in Eq.~\eqref{eq:constraint_comp} admits a unique solution, the mapping $\projFn$ from the state space to robot configuration space is bijective for a compressible \sensor, that is, there is a single state that the robot can be in for each robot configuration. Hence, if the compressible \sensor\, is fully compressible, i.e., $\delta_{max}=\ell$, in which $\delta_{max}$ is the maximum attainable compression and $\ell$ is the sensor length, the projection of the elements in $\Pi(h_{\CS})$ to $Q$ forms a partition $\Pi_Q(h_{\CS})$ of $Q$. 
Note that, if the \sensor\, is not fully compressible, that is, $\delta_{max}<\ell$, then $\projFn(X) \subset Q$, meaning that there are some configurations that are not collision-free for a robot carrying a compressible \sensor.

We now consider a fully compressible sensor of length $\ell$ together with a contact detector at the sensor tip, that is, $h'_{\CS}$ as given in Eq.~\eqref{eq:comp_contact},
and a depth-limited directional depth sensor $h_{dds}$ (Model~\ref{model:lim-directional-depth}) with $d_{min}=0$ and $d_{max}=\ell$. The observation spaces $Y_{\CS}$ and $Y_{dds}$ corresponding to $h'_{\CS}$ and $h_{dds}$ are  $Y_{\CS}=Y_{dds}=[0,\ell]$. Suppose the \sensor\, frame is aligned with the robot frame such that $\psi=0$ and $\Tsr(\psi)$ is the identity matrix (see Fig.~\ref{fig:coordinate-transforms}). 
Then, we can define a function $g: Y_{\CS} \rightarrow Y_{dds}$ such that $h_{dds}(\qRob)=(g \circ h_{\CS})(\qRob,\delta)=\ell-h_{\CS}(\qRob,\delta)$ which implies that $\Pi_Q(h'_{\CS})=\Pi(h_{dds})$ and proves that the following proposition holds.

\begin{proposition} A fully compressible \sensor\, $h'_{\CS}$ of length $\ell$ as described in Eq.~\eqref{eq:comp_contact} 
can simulate (in the sense from Section \ref{sec:sm}) a depth-limited directional depth sensor with $d_{min}=0$ and $d_{max}=\ell$.
\label{prop:comp_dir_depth}
\end{proposition}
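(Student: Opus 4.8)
The plan is to make rigorous the computation sketched just before the statement: exhibit an explicit $g$ with $h_{dds} = g \circ h'_{\CS}$ after projecting states to $Q$, and read off the resulting relation between the induced partitions. I would first record why the two sensors can be compared at all. Because $f_{comp}$ in Eq.~\eqref{eq:constraint_comp} has a unique solution, $\projFn : X \to Q$ is injective; and because the sensor is fully compressible ($\delta_{max} = \ell$), for every $\qRob \in Q$ the state $(\qRob, f_{comp}(\qRob,\delW))$ is collision-free and hence lies in $X$, so $\projFn$ is a bijection. Consequently the $\projFn$-images of the preimage classes of $h'_{\CS}$ form an honest partition $\Pi_Q(h'_{\CS})$ of all of $Q$, which is the object to be compared with $\Pi(h_{dds})$.

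Next I would carry out the geometry. Fix $\qRob = (q_x,q_y,q_\theta)$; since $\psi = 0$ and $\Tsr$ is the identity, the sensor axis is the ray emanating from $(q_x,q_y)$ in direction $q_\theta$ --- exactly the ray along which $h_{dds}$ measures --- and write $d(\qRob) = \norm{(q_x,q_y)-b(\qRob)}$. I argue by cases on $d(\qRob)$. If $d(\qRob) \le \ell$, the free-space extent in direction $q_\theta$ is $d(\qRob)$, so the unique solution of Eq.~\eqref{eq:constraint_comp} is $\delta = \ell - d(\qRob) \in [0,\ell]$; then the tip $p_t = (\ell-\delta,0)$ maps onto $b(\qRob)\in\delW$, so $h_{\CD}(\qRob;p_t)=1$ and $h'_{\CS}(\qRob,\delta) = \ell - d(\qRob)$, while $h_{dds}(\qRob) = d(\qRob)$ because $0 \le d(\qRob) \le \ell = d_{max}$. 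If $d(\qRob) > \ell$, the uncompressed strip of length $\ell$ never reaches $\delW$, so $\delta = 0$, the tip is interior, $h_{\CD}(\qRob;p_t)=0$, hence $h'_{\CS}(\qRob,\delta) = \#$, and likewise $h_{dds}(\qRob) = \#$ since $d(\qRob) > d_{max}$.

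Finally I would set $g : Y_{\CS} \to Y_{dds}$ to be $g(\delta) = \ell - \delta$ for $\delta \in [0,\ell]$ and $g(\#) = \#$. The case analysis yields $h_{dds}(\qRob) = g\bigl(h'_{\CS}(\qRob,\delta)\bigr)$ with $\delta = f_{comp}(\qRob,\delW)$, for every $\qRob$; hence each class of $\Pi_Q(h'_{\CS})$ is contained in a class of $\Pi(h_{dds})$, i.e.\ $\Pi_Q(h'_{\CS})$ is a refinement of $\Pi(h_{dds})$, which is exactly simulation in the sense of Section~\ref{sec:sm}. (Since $g$ is in fact injective on $[0,\ell]$ and fixes $\#$, one even obtains $\Pi_Q(h'_{\CS}) = \Pi(h_{dds})$, i.e.\ equivalence, which is stronger than claimed.) The step that demands genuine care is the subcase $d(\qRob)\le\ell$: one must justify, from the defining property of $f_{comp}$, that the compression needed to conform to $\delW$ equals $\ell$ minus the free-space depth in direction $q_\theta$, and then check the extreme instances $\delta=0$ (grazing contact, $d(\qRob)=\ell$) and $\delta=\ell$ (robot base on $\delW$, which is collision-free precisely because $\delta_{max}=\ell$).
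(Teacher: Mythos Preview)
Your proposal is correct and follows the same route as the paper: the paper's proof is the paragraph immediately preceding the proposition, which simply declares $g(\delta)=\ell-\delta$ and asserts $\Pi_Q(h'_{\CS})=\Pi(h_{dds})$. Your write-up is a more careful version of that same argument --- you make explicit why $\projFn$ is a bijection (uniqueness of $f_{comp}$ plus full compressibility), you carry out the case split on $d(\qRob)\le\ell$ versus $d(\qRob)>\ell$, and you handle the $\#$ symbol correctly (the paper writes $Y_{\CS}=Y_{dds}=[0,\ell]$ and $g\circ h_{\CS}$, glossing over the no-contact branch that $h'_{\CS}$ was introduced to resolve). Your parenthetical observation that one actually obtains equivalence, not merely simulation, matches the paper's own conclusion $\Pi_Q(h'_{\CS})=\Pi(h_{dds})$.
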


For the limiting case, as $\ell\rightarrow \infty$, a fully retractable compression sensor of length $\ell$ can simulate a directional depth sensor ($d_{max}=\infty)$, which returns the distance to the closest point on the environment boundary along the direction of $q_\theta$. At the other extreme, for $\ell=0$, $h_{\CS}$ is merely a contact detector $h_{\CD}$ (Model~\ref{model:contact-detector}) with $p=(0,0)$. 
Furthermore, it follows from Proposition~\ref{prop:comp_dir_depth} that an array of of $K$ fully compressible \sensors\, with length $\ell$, arranged at angles $\psi_1, \dots, \psi_K$ with respect to the robot reference frame such that simultaneous observations yield the vector $(y_1, \dots, y_K)$ 
can simulate a depth-limited K-directional depth sensor $h_{kdd}$ described in Sec.~\ref{sec:depth_sensors}.


For a bendable \sensor, the solution to the contact constraint in Eq.~\eqref{eq:constraint_bend} may not be unique. For instance, consider the scenario as shown in Fig.~\ref{fig:multi-obstacle-modes} in which \sensor\, can be in multiple deflection modes (corresponding to different $\alpha$) at the same configuration $q_r=(q_x,q_y,q_\theta)$. 
Thus, $\projFn$ is a many to one mapping and the projection of the elements of $\Pi(h_{\DS})$ to $Q$ corresponds to a cover of $Q$. This implies that we can not compare $h_{\DS}$ with another sensor defined over $Q$ in terms of previously introduced definitions of equivalence and sensor simulation.

\begin{figure}
    \centering
    \includegraphics[width=0.5\linewidth]{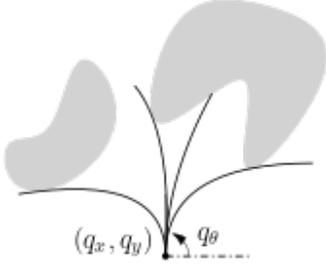}
    \caption{A bendable \sensor\, can be in multiple different modes at the same robot configuration. Therefore, the solution to the contact constraint is not unique.}
    \label{fig:multi-obstacle-modes}
\end{figure}

\subsection{Actuated \sensors}

Suppose that the \sensor\, is attached to a revolute joint that can be controlled to change its configuration. 
The state of the system $x$ now includes the robot configuration $\qRob$ and the joint configuration $\psi \in S^1$ such that $x=(\qRob,\psi,\dots)$.
A measurement is obtained by changing the configuration of the revolute joint while the robot configuration $\qRob$ is kept constant. Let a state trajectory up to time $t$ be denoted as $\histx: [0,t] \rightarrow X$. The set of all state-trajectories such that $\projFn(x)$ is constant for all $x \in \histx$ for any possible $t>0$ is denoted by $\histX$. The sensor mapping for an actuated sensor is $h: \tilde{X} \rightarrow Y$.
Since the state is now augmented with the joint configuration, in the following, with a slight abuse of previously introduced definitions, we consider the sensor models in Section~\ref{subsec:sensor-models} to be defined over this extended state space. 

We begin by considering a compressible \sensor\, described as $h'_{\CS}$ in Eq.~\ref{eq:comp_contact} attached to a revolute joint. The state of the system is expressed by $x=(\qRob,\psi,\delta)$.
Suppose that for each measurement, the revolute joint trajectory corresponds to a full rotation of the sensor that is defined as $\tilde{\psi}:[0,t] \rightarrow S^1$ such that $\tilde{\psi}$ is a  monotonically increasing function with $\tilde{\psi}(0)=0$ and $\tilde{\psi}(t)=2\pi$ while $\projFn(\tilde{x}(\tau))=\qRob$, $\forall \tau \in [0,t]$.

Let $\histX_{fr}\subseteq \histX$ be the set of all trajectories that satisfy this motion.
It follows from Proposition~\ref{prop:comp_dir_depth} that $h'_{\CS}$ can be used to obtain the distance to the boundary (up to $\ell$) along the sensor axis, which corresponds to the direction of $q_\theta+\psi$ for an actuated \sensor. 
Then, $\forall \psi \in [0,2\pi)$ the shifted distance $d(q_x,q_y,q_\theta+\psi)=\ell-h'_{\CS}(x)$. Considering the full motion that spans all $\psi \in [0,2\pi)$, $h_{\CS}$ combined with motion results in the observation $y:S^1 \rightarrow [0,\ell]$ such that $y(\psi)=d(q_x,q_y,q_\theta+\psi)$ if $d(q_x,q_y,q_\theta+\psi)\leq \ell$ and $y(\psi)=\#$, otherwise. Then, the following proposition holds. 

\begin{proposition}\label{prop:comp_omni_dir}
Suppose $\histX_{fr}$ corresponding to the full rotation of the revolute joint is not empty. Let $Q'\subseteq Q$ be defined as $Q' = \bigcup_{\histx \in \histX_{fr}}\projFn(\histx(0)).$ Then, an actuated compressible sensor as defined above can simulate (in the sense from Section \ref{sec:sm}) a depth-limited omnidirectional depth sensor (Model~\ref{model:lim-omnidir-depth}) defined over $Q'$. 
\end{proposition}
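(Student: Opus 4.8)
\emph{Proof proposal.} The plan is to show that the observation produced by one full rotation of the actuated compressible \sensor\ coincides, as a function of the robot configuration, with the observation of the depth-limited omnidirectional depth sensor, and then to transfer this equality down to $Q'$ through the projection $\projFn$. The argument is short because almost all the work has already been done: Proposition~\ref{prop:comp_dir_depth} handles each individual sweep direction, and the text immediately preceding the statement has already assembled these into the swept observation.

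First I would recall that, for a trajectory $\histx\in\histX_{fr}$ with $\projFn(\histx(\tau))\equiv\qRob$, running the full sweep yields the observation $y:S^1\to[0,\ell]\cup\{\#\}$ with $y(\psi)=d(q_x,q_y,q_\theta+\psi)$ when $d(q_x,q_y,q_\theta+\psi)\le\ell$ and $y(\psi)=\#$ otherwise; this invokes Proposition~\ref{prop:comp_dir_depth} once per sweep angle $\psi$, using that the sensor axis at joint angle $\psi$ points along $q_\theta+\psi$. Since $d\ge 0$ always, the clause ``$d\le\ell$'' is the same as ``$0\le d\le\ell$'', so $y(\psi)=h_{dds}(q_x,q_y,q_\theta+\psi)$ for the depth-limited directional depth sensor with $d_{min}=0$, $d_{max}=\ell$; comparing with Model~\ref{model:lim-omnidir-depth} and identifying $S^1$ with $[0,2\pi)$, this is exactly $h_{od}(\qRob)$.

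Next I would address the change of domain, since the actuated sensor is defined over trajectories in $\histX_{fr}$ while $h_{od}$ is defined over $Q'$. Two facts are needed. First, by the definition of $Q'$ given in the statement, $\projFn(\histx(0))$ ranges over all of $Q'$ as $\histx$ ranges over $\histX_{fr}$, so every configuration of $Q'$ is represented. Second --- and this is the only point requiring care --- the observation $y$ depends on $\histx$ only through $\qRob=\projFn(\histx(0))$: because the compression contact constraint \eqref{eq:constraint_comp} has a \emph{unique} solution, for each fixed $\qRob$ and each $\psi$ the compression $\delta$, hence the value $h'_{\CS}(\qRob,\psi,\delta)$, is determined; and since every trajectory in $\histX_{fr}$ is a monotone sweep attaining each $\psi\in S^1$, the function $\psi\mapsto y(\psi)$ is the same for all such trajectories sharing a base configuration. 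Consequently the partition of $\histX_{fr}$ induced by the actuated sensor descends to a well-defined partition of $Q'$, which by the previous paragraph equals $\Pi(h_{od})$; taking $g$ to be the identification of the two observation spaces gives $h_{od}(\projFn(\histx(0)))=g(h'_{\CS}(\histx))$, establishing simulation (in fact equivalence) on $Q'$.

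The main obstacle, such as it is, is the well-definedness step in the last paragraph: everything hinges on the uniqueness of the solution to \eqref{eq:constraint_comp}, which is exactly the property that fails for bendable \sensors\ (cf.\ Fig.~\ref{fig:multi-obstacle-modes}) and the reason the analogous statement is not available there. I would also note in passing that the hypothesis ``$\histX_{fr}$ nonempty'' is used only to guarantee $Q'\ne\emptyset$, so that $h_{od}$ is being compared over a nonempty domain; and that the $S^1\cong[0,2\pi)$ identification, while harmless, should be stated explicitly so that $g$ is genuinely a map between the two observation spaces.
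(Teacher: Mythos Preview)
Your proposal is correct and follows the same line as the paper: the paper's ``proof'' is the paragraph immediately preceding the proposition, which builds $y(\psi)=d(q_x,q_y,q_\theta+\psi)$ (or $\#$) from Proposition~\ref{prop:comp_dir_depth} applied at each $\psi$ and identifies it with $h_{od}(\qRob)$. Your write-up adds explicit care about well-definedness on $Q'$ via the uniqueness of~\eqref{eq:constraint_comp}, which the paper leaves implicit; this is a welcome clarification but not a different approach.
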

The main reason why an actuated compressible \sensor\,(even if it is fully compressible) may not simulate a depth-limited omnidirectional depth sensor defined over the entire $Q$ is that for certain $\qRob$ a full rotation of the revolute joint may not be possible due to the sensor characteristics and the environment (see for example, Fig.~\ref{fig:compression-issue}). This issue could be alleviated to some extent (although not fully) by defining a slightly more complicated motion (for example, a full (or partial) \ac{ccw} followed by a rotation in the \ac{cw} direction) or with the sensor design. To this end, configuration space decomposition of a robot carrying a compressible sensor and determining its connectivity remains as an interesting open problem. 

We now consider a mobile robot equipped with a bendable \sensor\, composed of a bendable link together with a deflection sensor (Model~\ref{model:deflection-sensor}) attached to a revolute joint. The state of the system is expressed as $x=(\qRob,\psi,\alpha)$. 
Considering the same motion corresponding to a full rotation of the revolute joint, let $P \subset W$ be part of the environment swept by the \sensor\, such that 
\begin{equation}\label{eq:swept_area_def_sensor}
P=\bigcup_{x\in \tilde{x}}\bigcup_{s\in[0,1]}\Trw(\qRob)\Tsr(\psi)\sigma_\alpha(s).    
\end{equation}
The observation is the function $y(\psi)=h_{\DS}(x)$. The preimage of $y$ contains all possible robot configurations $\qRob$ such that same profile information is obtained, i.e., all $\qRob$ such that $P \in W$ and $\partial P \cap \delW \not=\emptyset$, in which $\partial P$ is the boundary of the area swept by the \sensor. Note that the preimages corresponding to the combination of rotary motion and deflection sensor still result in a cover of the robot configuration space since different $P$ can be obtained for the same $\qRob$ based on the configuration of the bendable \sensor\, at the beginning of the motion.

We now consider a motion similar to actuated whisking (see for example \cite{prescott2009whisking, lepora2018tacwhiskers}) which corresponds to a full \ac{ccw} rotation followed by a full \ac{cw} rotation. In the following, we will show that with this type of motion, under certain conditions, a bendable \sensor\, with a deflection sensor can simulate a depth sensor.
Let $\mathcal{D}_\ell(q_x,q_y)$ denote the disk of radius $\ell$ centered at $(q_x,q_y)$ as shown in Fig.~\ref{fig:actuated-whisking}. Suppose that $\mathcal{D}_\ell(q_x,q_y) \setminus W$ (light gray area in Fig.~\ref{fig:actuated-whisking}) is convex and that at the beginning of the motion, the \sensor\, is in its nominal form, that is, $\alpha=0$. Denote the intersection point of the \sensor\, at the beginning of the motion and the boundary of $\mathcal{D}_\ell(q_x,q_y)$ as $p_0$. The instances at which $\alpha$ changes from zero to non-zero or non-zero to zero are called critical instances and they happen when the \sensor\, starts bending or regains its nominal form. 
Since $\mathcal{D}_\ell(q_x,q_y) \setminus W$ is convex, a rotation in single direction results in at most two critical instances: when the sensor gets in contact with the boundary of $W$ so it starts bending and if possible, when it regains its nominal shape. Suppose, during a full rotation in the \ac{ccw} direction, the nominal shape is achieved after bending and the swept area is denoted by $P_{ccw}$.
Let $p_1, p_2 \in \partial\mathcal{D}_\ell(q_x,q_y)$ be the intersection points of the \sensor\, and $\partial\mathcal{D}_\ell(q_x,q_y)$ at the beginning and at the end of the bending, respectively (see Fig.~\ref{fig:actuated-whisking}). We assume that $P_{ccw} \cap \partial W$ is a connected curve. Let $u_1$ be an endpoint of this curve which corresponds to the last point along the $\partial W$ that the \sensor\, is in contact with during a \ac{ccw} motion. For \ac{cw} rotation, denote the swept area by $P_{cw}$ and define $p_3, p_4$, similar to $p_1,p_2$, and define $u_2$ similar to $u_1$. We again assume that $P_{cw} \cap \partial W$ is connected. We say that $p'$ lies to the left of $p$ if $p'$ lies on the left hand side of the ray starting at $(q_x,q_y)$ and passing through $p$, the relation is denoted by $p' >_\ell p$.

Let $V(q_x,q_y)\subseteq W$ be the set of points visible from $(q_x,q_y)$. {A point $p'\in W$ is visible from $p \in W$ if the line segment $\overline{uv}$ is contained in $W$.} The following lemma establishes a relation between the visibility of $(q_x,q_y)$ and a bendable sensor at $(q_x,q_y)$ executing a whisking motion as described above which satisfies the assumptions.

\begin{lemma}\label{lemma:bendable_visibility}
Suppose $\mathcal{D}_\ell(q_x,q_y) \setminus W$ is convex,
and suppose that $u_1 >_\ell u_2$ and $p_0>_\ell p_2$. Then, $P = \mathcal{D}_\ell(q_x,q_y) \cap V(q_x,q_y)$, in which $P=P_{ccw} \cup P_{cw}$.
\end{lemma}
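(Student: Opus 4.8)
The plan is to translate so that $(q_x,q_y)$ becomes the origin $o$ and work in polar coordinates $(r,\phi)$; since both $P=P_{ccw}\cup P_{cw}$ and $\mathcal{D}_\ell(q_x,q_y)\cap V(q_x,q_y)$ lie inside $\mathcal{D}_\ell(q_x,q_y)$, the claim becomes a planar visibility identity. The first step is to unpack the hypothesis that $C:=\mathcal{D}_\ell(q_x,q_y)\setminus W$ is convex (with $o\in W$, hence $o\notin C$): the directions along which the straight length-$\ell$ whisker would meet $C$ form a single open arc $I=(\phi_1,\phi_2)$; writing $d(\phi)$ for the distance from $o$ to $C$ along $\phi$, one has $d(\phi)\ge\ell$ precisely when $\phi\notin I$ and $\mathcal{D}_\ell(q_x,q_y)\cap V(q_x,q_y)=\{(r,\phi):0\le r\le\min(\ell,d(\phi))\}$, a region star-shaped about $o$; and the boundary points of $C$ visible from $o$ form a single arc $A_0\subseteq\partial C\cap\partial W$ joining the two points at which a line through $o$ is tangent to $C$ (the ``silhouette'' points, necessarily within distance $\ell$ of $o$ since $C\subseteq\mathcal{D}_\ell(q_x,q_y)$). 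I would also record that $\phi_1,\phi_2$ are the directions of $p_1,p_2$ for the \ac{ccw} sweep (bending begins when the base direction reaches $\phi_1$ and ends at $\phi_2$) and, symmetrically, of $p_4,p_3$ for the \ac{cw} sweep, and that $p_0>_\ell p_2$ places the start of the whisking motion on the \ac{ccw} side of $\phi_2$.

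The easy inclusion is $P\subseteq\mathcal{D}_\ell(q_x,q_y)\cap V(q_x,q_y)$. Any point of $P$ lies on the whisker curve in some visited configuration; that curve is rooted at $o$ and has length $\le\ell$, so it lies in $\mathcal{D}_\ell(q_x,q_y)$, and it never crosses $\partial W$, so it lies in $\overline W$. For visibility of such a point $p$ — equivalently, the segment from $o$ to $p$ avoids $C$ — one notes that where the whisker is undeformed the relevant portion is a radial segment, and where it is pressed against the obstacle it lies on $A_0$; the length bound together with convexity of $C$ (a triangle-inequality estimate) then shows the whisker runs out of length before it could wrap past a silhouette point and reach the region occluded by $C$. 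The ``at most two critical instances per rotation direction'' property, itself forced by convexity of $C$, is what rules out more pathological deformations. Hence $p\in V(q_x,q_y)$.

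For the reverse inclusion I would split on the direction $\phi$ of a target point $p=(r,\phi)$ with $r\le\min(\ell,d(\phi))$. If $\phi\notin\overline I$ then $\min(\ell,d(\phi))=\ell$, and it suffices to observe that during the full \ac{ccw} rotation the whisker passes through base direction $\phi$ in its nominal straight form — this is where $p_0>_\ell p_2$ enters, guaranteeing that over the $2\pi$ rotation the whisker deforms only while its base points into $I$ and is nominal at every other direction — so the whole segment $\{(t,\phi):0\le t\le\ell\}$ is swept. If $\phi\in I$, the visible slice $\{(t,\phi):0\le t\le d(\phi)\}$ has to be recovered from the two bending sweeps: I would show that inside the wedge spanned by $I$, $P_{ccw}$ is exactly the region caught between $o$ and the sub-arc of $A_0$ running from the $\phi_1$-silhouette point to $u_1$, and that $P_{cw}$ is the region caught between $o$ and the sub-arc of $A_0$ from the $\phi_2$-silhouette point to $u_2$. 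The hypothesis $u_1>_\ell u_2$ then forces these two sub-arcs to overlap, so their union is all of $A_0$ and the two swept pieces together fill the visible wedge. Combining the two cases with the easy inclusion gives $P=\mathcal{D}_\ell(q_x,q_y)\cap V(q_x,q_y)$.

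The genuinely hard step is the wedge analysis in the reverse inclusion: one must commit to how the deflected whisker sits against a convex obstacle during a one-directional rotation, prove its swept set has the claimed frontier (the relevant sub-arc of $A_0$ together with straight ``tails'' at its two ends), and only then does the gluing via $u_1>_\ell u_2$ become immediate. A persistent technical nuisance is that nearly every object in the argument ($I$, $A_0$, the silhouette points, connectedness of $P_{ccw}\cap\partial W$ and $P_{cw}\cap\partial W$, and the endpoints $u_1,u_2$) is only well behaved because $\mathcal{D}_\ell(q_x,q_y)\setminus W$ is convex, so that hypothesis must be invoked repeatedly rather than once.
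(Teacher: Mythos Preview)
Your proposal is correct and follows essentially the same approach as the paper: both arguments decompose $\mathcal{D}_\ell(q_x,q_y)\cap V(q_x,q_y)$ via the two tangent rays from $(q_x,q_y)$ to the convex set $\mathcal{D}_\ell(q_x,q_y)\setminus W$ into an outer sector (your $\phi\notin\overline I$, the paper's region $A$) and an inner wedge bounded by the visible portion of $\partial W$ (your $\phi\in I$ with frontier $A_0$, the paper's region $B$), then use $p_0>_\ell p_2$ to match the nominal sweeps with the sector and $u_1>_\ell u_2$ to force the two bent sweeps to cover the wedge. Your organization as a two-inclusion argument in polar coordinates, with explicit attention to $P\subseteq\mathcal{D}_\ell\cap V$, is a bit more careful than the paper's direct identification of both sides with $A\cup B$, but the substance is the same.
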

\begin{proof}
Since $\mathcal{D}_\ell(q_x,q_y) \setminus W$ is convex, there are two tangents from $(q_x,q_y)$. Let $\Vec{r}_r$ and $\Vec{r}_\ell$ be two rays starting from $(q_x,q_y)$ and representing the right and left tangents, respectively (see Fig.~\ref{fig:actuated-whisking}). Rays $\Vec{r}_r$ and $\Vec{r}_\ell$ intersect the boundary of $\mathcal{D}_\ell(q_x,q_y)$ at two distinct points, denoted by $p_r$ and $p_\ell$, respectively. Let $p_\ell{}^\frown p_r$ be the circular arc from $p_\ell$ to $p_r$. A circular arc $p{}^\frown p'$ satisfies $p''>_\ell p, \forall p''\in p{}^\frown p'$.
Let $A$ (green area in Fig.~\ref{fig:actuated-whisking}) be the circular sector corresponding to the arc $p_\ell{}^\frown p_r$ and let $B$ (blue area in Fig.~\ref{fig:actuated-whisking}) be the connected region bounded by $\Vec{r}_r$ (corresponding to the left half-plane determined by $\Vec{r}_r$), $\Vec{r}_\ell$ (corresponding to the right half-plane determined by $\Vec{r}_\ell$) and $\partial W$ that includes $(q_x,q_y)$.
Then, $\mathcal{D}_\ell(q_x,q_y) \cap V(q_x,q_y) = A \cup B$. 
Consider the \ac{ccw} motion. The area swept before bending is the circular sector determined by the arc $p_0{}^\frown p_1$. Since $p_0 >_\ell p_2$, sensor achieves its nominal form before it reaches back to $p_0$. Then, the total area swept with $\alpha=0$ is the circular sector determined by $p_0{}^\frown p_1$. Furthermore, since $p_1$ is the intersection point when the bending begins, the ray starting at $(q_x,q_y)$ and passing through $p_1$ is the right tangent $\Vec{r}_r$. Similarly, consider the \ac{cw} motion. The area swept before the bending begins is the circular sector determined by the arc $p_3{}^\frown p_0$. Since $p_3$ is before the sensor begins to bend, the ray starting at $(q_x,q_y)$ and passing through $p_3$ corresponds the left tangent $\Vec{r}_\ell$. Hence, $p_3=p_\ell$. Consequently, the union of the two circular sectors corresponding to area swept with $\alpha=0$ resulting from ccw and cw motion equals to $A$. Since $P_{cw} \cap \delW$ and $P_{ccw} \cap \delW$ are two connected curves and $u_1 >_\ell u_2$, part of $\delW$ visible from $(q_x,q_y)$ is swept. Furthermore, bending is a continuous deformation which implies that the swept area does not have any holes. Then, the area swept for $\alpha \neq 0$ equals to $B$. This shows that $A \cup B =\mathcal{D}_\ell(q_x,q_y) \cap V(q_x,q_y)= P$, in which $P=P_{cw}\cup P_{ccw}$.
\end{proof}

\begin{figure}
    \centering
    \includegraphics[scale=0.15]{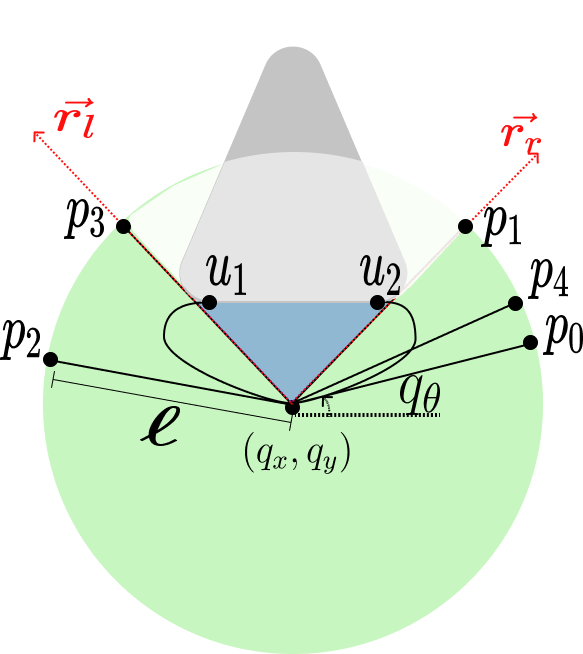}
    \caption{Critical points for an actuated bendable \sensor\ mounted on a revolute joint capable of cw and ccw motions. Light gray area corresponds to $\mathcal{D}_\ell(q_x,q_y) \setminus W$, green area is $A$ and blue area is $B$.}
    \label{fig:actuated-whisking}
\end{figure}

Let $\histX_w$ be the set of all trajectories corresponding to the whisking motion and let $\histX'_w \subseteq \histX_w$ be the ones such that for each $\histx \in \histX'_w$ it is true for the corresponding swept area $P$ that $u_1 >_\ell u_2$ and $p_0>_\ell p_2$ and that $\mathcal{D}_\ell(q_x,q_y) \setminus W$ with respect to the robot configuration $\projFn(\histx(0))=q_r$ is convex. We can now compare a bendable \sensor\, with a depth sensor owing to the inclusion of motion.

\begin{proposition}
Suppose $\histX'_{w}$ is not empty. Let $Q'\subseteq Q$ be defined as $Q' = \bigcup_{\histx \in \histX_{fr}}\projFn(\histx(0)).$ Then, an actuated deflection \sensor\, with a contact detector at the tip executing a whisking motion as defined above can simulate (in the sense from Section \ref{sec:sm}) $h_{od}$ (Model~\ref{model:lim-omnidir-depth}) defined over $Q'$.
\end{proposition}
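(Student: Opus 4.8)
The plan is to exhibit a chain of deterministic maps carrying the observation produced by the actuated deflection \sensor\, with a tip contact detector, under the whisking motion, to the reading of $h_{od}$ (taken with $d_{min}=0$, $d_{max}=\ell$) at the same robot configuration. This shows that any two configurations in $Q'$ with a common whisking observation share an $h_{od}$ observation, so $\Pi_Q$ of the actuated \sensor\, refines $\Pi(h_{od})$ over $Q'$, which is exactly what ``$h_1$ can simulate $h_2$'' requires (Section~\ref{sec:sm}). The chain is: observation $\to$ swept region $P$ together with a labeling of $\partial P$ into a contact part and a free part $\to$ (by Lemma~\ref{lemma:bendable_visibility}) the clipped visibility region $\mathcal{D}_\ell(q_x,q_y)\cap V(q_x,q_y)$ with the same labeling $\to$ the function $\phi\mapsto h_{dds}(q_x,q_y,q_\theta+\phi)$.

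For the first arrow, fix $\histx\in\histX'_w$ with $\projFn(\histx(0))=\qRob$; because the motion begins in the nominal form ($\alpha=0$), this trajectory is pinned down by $\qRob$ and $W$ (track $\alpha=f_{bend}$ continuously from $0$). The observation records, for each joint angle $\psi$ visited during the ccw pass, the pair $(\alpha(\psi),c(\psi))$ where $\alpha(\psi)$ is the value of $h_{\DS}$ at that instant and $c(\psi)\in\{0,1\}$ is the tip contact bit, and likewise for the cw pass. Using the stated fact that $\alpha$ determines the \sensor\, shape $\sigma_\alpha$ and that $\Tsr(\psi)$ is known, $\bigcup_{\psi}\bigcup_{s\in[0,1]}\Tsr(\psi)\sigma_{\alpha(\psi)}(s)$ recovers $P_{ccw}$ in the robot frame (the frame in which $h_{od}(\qRob)$ is also expressed), and similarly $P_{cw}$, hence $P=P_{ccw}\cup P_{cw}$. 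The same data labels $\partial P$: the directions for which $\alpha\neq 0$, together with those for which $\alpha=0$ but $c=1$, are exactly those in which the outer boundary of $P$ lies on $\delW$, whereas $\alpha=0$ with $c=0$ marks the directions in which it lies on $\partial\mathcal{D}_\ell(q_x,q_y)$.

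For the second arrow, since $\histx\in\histX'_w$ the hypotheses of Lemma~\ref{lemma:bendable_visibility} hold at $\qRob$, so $P=\mathcal{D}_\ell(q_x,q_y)\cap V(q_x,q_y)$ and its contact part is $\delW\cap\mathcal{D}_\ell(q_x,q_y)$. Both $V(q_x,q_y)$ and $\mathcal{D}_\ell(q_x,q_y)$ are star-shaped about $(q_x,q_y)$, hence so is $P$, and its radial function in direction $\phi$ is $\min\{d(\phi),\ell\}$, where $d(\phi)$ is the distance from $(q_x,q_y)$ to $\delW$ along $\phi$. Returning this radial value in the directions where $\partial P$ is labeled ``contact'' and $\#$ in those where it is labeled ``free'' reproduces exactly $h_{dds}(q_x,q_y,q_\theta+\phi)$, that is, $h_{od}(\qRob)$; the tip contact bit is what separates the boundary case $d(\phi)=\ell$ (so $h_{dds}=\ell$) from $d(\phi)>\ell$ (so $h_{dds}=\#$), which is why a tip contact detector is part of the hypothesis. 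Since every arrow is a function, a shared whisking observation forces a shared $h_{od}$ reading, which gives the refinement and hence the proposition. (Only the union $P_{ccw}\cup P_{cw}$, not the two pieces separately, is used, so the refinement will in general be strict; this is immaterial for ``simulate''.)

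I expect this to be bookkeeping rather than a conceptual gap, since Lemma~\ref{lemma:bendable_visibility} already performs the geometric identification of the swept set. The points needing care are: keeping all sets expressed in the robot frame so that the comparison with $h_{od}$ is legitimate; justifying the reconstruction of $P$ from the deflection readings via $\alpha\mapsto\sigma_\alpha$; and the degenerate direction $d(\phi)=\ell$, handled by the tip contact bit. I would also note that $Q'$ in the statement should be read as $\bigcup_{\histx\in\histX'_w}\projFn(\histx(0))$, the set on which Lemma~\ref{lemma:bendable_visibility} is available, rather than the $\histX_{fr}$ appearing in the displayed definition.
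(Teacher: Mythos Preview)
Your proposal is correct and follows essentially the same route as the paper: reconstruct the swept region $P$ from the deflection readings via $\alpha\mapsto\sigma_\alpha$, invoke Lemma~\ref{lemma:bendable_visibility} to identify $P$ with $\mathcal{D}_\ell(q_x,q_y)\cap V(q_x,q_y)$, and then read off the radial distance to $\partial P$ in each direction, using the tip contact bit (equivalently, the paper's condition $h'_{\DS}(x)=(0,0)$) to decide between returning that distance and returning $\#$. Your version is more explicit about star-shapedness and the partition refinement argument, and you correctly flag that the displayed definition of $Q'$ should use $\histX'_w$ rather than $\histX_{fr}$.
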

\begin{proof}
It follows from Lemma~\ref{lemma:bendable_visibility} that for each $q_r\in Q'$, the swept area $P$ with respect to the trajectory $\histx \in \histX'_w$ such that $\projFn(\histx(0))=q_r$ satisfies $P=\mathcal{D}_\ell(q_x,q_y) \cap V(q_x,q_y)$. Then we can define function $y: [0,2\pi) \rightarrow [0,\ell]$ such that $y(\psi)=||(q_x,q_y) - b_{P}(q_x,q_y,q_\theta+\psi)||$, in which $b_{P}(q_x,q_y,q_\theta+\psi)$ is the point struck on the boundary of $\partial P$ by the ray in the direction of $q_\theta+\psi$ starting from $(q_x,q_y)$ if no $x=(q_x,q_y,q_\theta,\psi,\alpha) \in \histx$ satisfies $h'_{\DS}(x)=(0,0)$ (see Eq.~\eqref{eq:comp_deflect} and recall that all the sensor mappings are now defined over the augmented state space), and $\#$ otherwise.
Finally $P$ is defined as Eq.~\eqref{eq:swept_area_def_sensor} for a whisking motion using $h'_{\DS}$ (Model~\ref{model:deflection-sensor}) to determine $\sigma_\alpha$ for each $x \in \histx$.
\end{proof}

Similar to rotary motion, one can consider a translational motion too, in which case the \sensor\, is swept (if in contact) along the boundary of the environment. This introduces an additional complexity since the robot configuration is no longer fixed over the course of sensor movement. We expect translational motion together with a rotational motion (similar to whisking) to be the most useful for touch based navigation. However, we will investigate such touch behaviors in future works.

\section{Discussion}
\label{sec:conc}

In this work, we introduced several noise-free \textit{virtual \sensor\, models} of a \sensor\, mounted on a mobile robot. 
These models were then used to show under which conditions these types of sensors are equivalent to conventional visibility-based models.
The motivation behind this study was to elevate the scattered use of \sensors\ up to the level of a broad sensor category that warrants systematic modeling to better understand their applications for navigation. This would then inspire designing more navigation-centric \sensors\, which could be mounted on a mobile robot for safely exploring unknown environments. This is especially useful in vision devoid scenarios like underwater inspections in murky waters or exploration in areas with suspended caustic chemicals that would otherwise damage sensors like cameras and Lidars. Mobile robots endowed with touch sensing abilities, when deployed in unstructured environments, can benefit from our study in various domains such as health care, sports, ergonomics, logistics, and service robotics.

Existing \sensor\, designs are primarily geared towards furthering our understanding of touch receptors such as vibrissae in rats~\cite{pearson2011biomimetic} and human-like grasping~\cite{chen2018tactile}. However, \sensors\, such as the contact strip (Model~\ref{model:contact-detector-strip}) and the compression sensor (Model~\ref{model:compression-sensor}) are also useful in reducing the uncertainty in state estimate while ensuring the safety of the \sensor\ and the robot.

When comparing \textit{virtual sensor models}, the main difficulty was the lack of shared state space across various sensors under consideration (as certain configurations are not feasible for \sensors). This can be alleviated by designing \sensors\, which are fully compressible, retractable, and/or bendable. To illustrate, consider the scenario shown in Fig.~\ref{fig:compression-issue} where the \sensor\, disrupts the connectivity of the robot configuration space. In such cases, using a different \sensor\, (such as a bendable \sensor) can be the solution. Additionally, for real world deployment, \sensors\, need to be robust to wear and tear and the abilities such as telescopic extensions or fully retractable compression can help the robot maneuver even in tight spaces by modulating its footprint which is otherwise not feasible when using conventional sensors like a camera. 

\begin{figure}[!htbp]
    \centering
    \includegraphics[scale=0.2]{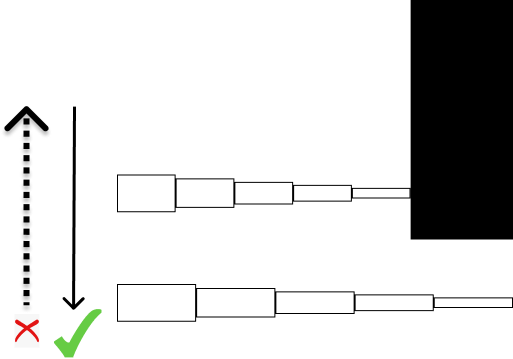}
    \caption{Navigation limitations with compression sensor}
    \label{fig:compression-issue}
\end{figure}

The next step in our study would be to consider actuated \sensors\, on a mobile base such that the system would have both rotational and translation degrees of freedom as opposed to only rotation as considered above. This would open new avenues for developing impact-resilient robots by utilizing \sensor\ feedback-based motion primitives. The models presented here were for an ideal \sensor\, however, the abstract models independent of their physical realization could very well serve a purpose similar to the pinhole camera model for computer vision.  For the real world, sensor noise needs to be accounted for as in such cases the preimages result in a cover as opposed to a partition of the state space. To address the stochastic nature of the sensors, sensor fusion and effective data representation algorithms are critical aspects to consider when using \sensors\ for mobile robot navigation. Sensor fusion techniques could be used for combining spatial, temporal, or spatiotemporal sensor data from multiple modules of the same sensor (as in a vibrissal array) or heterogeneous sensors (as in fusing \sensor\ data with other on board sensors). As a result, developing efficient filtering and sensor fusion algorithms for processing touch data becomes a priority, especially tactile SLAM. Recent attempts at sensor fusion (see~\cite{struckmeier2019vita,pearson2021multimodal}) show promising performance improvements when \sensor\, information is fused with other sensors on board such as cameras which have longer detection range and dense area coverage. Touch sensors, despite their short range, can complement such sensors and dedicated filters for fusing information need to be developed. Aside from accounting for noise, the ideal models considered in this work can also be extended to more practical situations such as those involving multiple forces simultaneously acting on the cantilever, and simultaneous compression and deflecting of the beam.

\bibliographystyle{./bibliography/IEEEtran}
\bibliography{root}

\end{document}